\def\eqref#1{equation~\ref{#1}}
\def\1{\bm{1}}
\DeclareMathAlphabet{\mathsfit}{\encodingdefault}{\sfdefault}{m}{sl}
\SetMathAlphabet{\mathsfit}{bold}{\encodingdefault}{\sfdefault}{bx}{n}
\definecolor{linkcolor}{RGB}{83,83,182}
\definecolor{citecolor}{RGB}{128,0,128}
\title{Schur's Positive-Definite Network:\\Deep Learning in the SPD cone with structure}
\author{Can Pouliquen \\
ENS de Lyon, Inria, CNRS,\\
Université Claude Bernard Lyon 1,\\
LIP, UMR 5668, 69342, Lyon cedex 07, France\\
\texttt{can.pouliquen@ens-lyon.fr} \\
\And
Mathurin Massias \\
Inria, ENS de Lyon, CNRS,\\
Université Claude Bernard Lyon 1,\\
LIP, UMR 5668, 69342, Lyon cedex 07, France\\
\texttt{mathurin.massias@inria.fr} \\
\And
Titouan Vayer\\
Inria, ENS de Lyon, CNRS,\\
Université Claude Bernard Lyon 1,\\
LIP, UMR 5668, 69342, Lyon cedex 07, France\\
\texttt{titouan.vayer@inria.fr}
}
\DeclareMathOperator{\NN}{NN}
\begin{document}

\maketitle

\begin{abstract}
  Estimating matrices in the symmetric positive-definite (SPD) cone is of interest for many applications ranging from computer vision to graph learning.
  While there exist various convex optimization-based estimators, they remain limited in expressivity due to their model-based approach.
  The success of deep learning motivates the use of \emph{learning-based} approaches to estimate SPD matrices with neural networks in a data-driven fashion.
  However, designing effective neural architectures for SPD learning is challenging, particularly when the task requires additional structural constraints, such as element-wise sparsity.
  Current approaches either do not ensure that the output meets all desired properties or lack expressivity.
  In this paper, we introduce SpodNet, a novel and generic learning module that guarantees SPD outputs and supports additional structural constraints.
  Notably, it solves the challenging task of learning jointly SPD and sparse matrices.
  Our experiments illustrate the versatility and relevance of SpodNet layers for such applications.
\end{abstract}

\section{Introduction}
\label{sec:introduction}
The estimation of symmetric positive-definite (SPD) matrices is a major area of research, due to their crucial role in various fields such as optimal transport \citep{bonet2023sliced},
graph theory \citep{lauritzen1996graphical},
computer vision \citep{nguyen2019neural}
or finance \citep{ledoit2003improved}.
While various statistical estimators, i.e. \emph{model-based},  have been developed for specific tasks \citep{ledoit2004well,banerjee2008model, cai2011constrained}, recent advancements focus on applying generic \emph{learning-based} approaches to estimate appropriate SPD matrices with neural networks in a data-driven fashion \citep{huang2017riemannian,gao2020learning}.

Training neural networks while enforcing non-trivial structural constraints such as positive-definiteness is a difficult task.
There have been many efforts in this direction in recent years, often in an ad-hoc manner and each with their own shortcomings (see \Cref{sec:related_works} for more details).
Building on the seminal work of \citet{gregor2010learning} in sparse coding, a promising research direction involves designing neural networks architectures from the unrolling of an optimization algorithm \citep{chen2016trainable, monga2021algorithm,chen2022learning,shlezinger2023model}.
In the case of SPD matrices, algorithm unrolling presents several challenges.
First, algorithms operating in the SPD cone usually rely on heavy operations such as retractions \citep{boumal2023introduction}, SVD or line search \citep{rolfs2012iterative}. These operations do not integrate well into a neural network architecture, making these algorithms difficult to unroll.

Additionally, and more importantly, many applications require further structural constraints on the learned matrix.
Elementwise sparsity is a typical example of such constraints \citep{banerjee2008model}, which significantly increases the complexity of the task: learning functions that simultaneously enforce SPDness and sparsity of the output is known to be challenging \citep{guillot2015functions,sivalingam2015sparse}.
\emph{There are currently no neural architectures that enable learning jointly SPD and sparse matrices.}

In this paper, we bridge this gap and make the following contributions:
\vspace{-\topsep}
\begin{itemize}
  \item We introduce a new SPD-to-SPD neural network architecture that also supports enforcing additional constraints on the output.
        We refer to it as SpodNet for \emph{Schur's Positive-Definite Network}.
        As a particular case, we show that SpodNet
        is able to learn jointly SPD and sparse matrices.
        To the best of our knowledge, {\bf SpodNet is the first architecture to provide strict guarantees for both properties}.
  \item We demonstrate the framework's relevance through applications in sparse precision matrix estimation.
        We highlight the limitations of other learning-based approaches and show how SpodNet addresses these issues.
        Our experiments validate SpodNet's effectiveness in jointly learning SPD-to-SPD and sparsity-inducing functions, yielding competitive results across various performance metrics.
\end{itemize}

\paragraph{Notation}
We reserve the bold uppercase for matrices $\bTheta$, bold lowercase $\btheta$ for vectors and standard lowercase $\theta$ for scalars.
The soft-thresholding function is $\ST_\gamma(\cdot) = \sign(\cdot) \max(\abs{\cdot} - \gamma, 0)$ for $\gamma \geq 0$; it acts elementwise on vectors or matrices.
On matrices, $\norm{\cdot}_1$ is the sum of absolute values of the matrix coefficients.
The cone of $p$ by $p$ SPD matrices is denoted $\mathbb{S}^p_{++}$.

\begin{figure}[t]
  \centering
  \includegraphics[width=\linewidth]{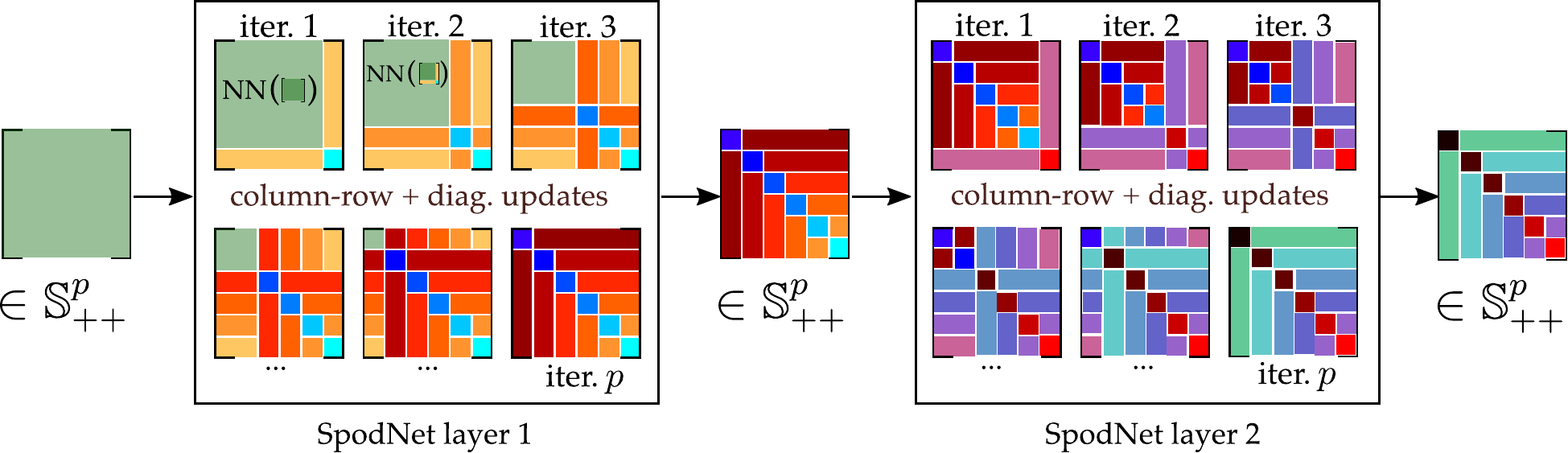}
  \caption{
    A SpodNet layer chains $p$ updates of column-row pairs and diagonals using neural networks.
    The matrices remain SPD at all times via Schur's condition.
  }
  \label{fig:spodnet}
\end{figure}

\section{Related works}\label{sec:related_works}

We first review existing approaches for estimating SPD matrices, which can be broadly divided into three categories, all suffering the same limitation of not being able to handle additional structural constraints.

\paragraph{Riemannian approaches} Riemannian optimization provides tools to build algorithms whose iterates lie on Riemannian manifolds, such as the SPD manifold \citep{absil2008optimization}.
Many have adopted these tools to design neural architectures that operate on the manifold of SPD matrices \citep{huang2017riemannian, gao2020learning, gao2022learning}.
However, a common and significant bottleneck of these methods is the use of Riemannian operators, which are notoriously expensive to compute.
Beyond this computational hindrance, and more importantly, current Riemannian methods are not able to impose additional sparsity on the learned matrices without breaking the SPD guarantee.

\paragraph{SPD layers} Neural layers with SPD outputs have also been proposed in \citet{dong2017deep, nguyen2019neural}.
A first line of work rely on the linear mapping $\bm{X}_{k+1} = \bm{W}_k \bm{X}_{k} \bm{W}_k^\top$ (with the layer's weights $\bm{W}_k$ having full row-rank), while some others rely on clipping the eigenvalues of their output.
Unfortunately, there are no obvious ways to incorporate additional structure such as elementwise sparsity on the matrices without risking breaking their SPD guarantee.
For instance, hard-thresholding the off-diagonal entries of a SPD matrix does not in general preserve the SPD property \citep{guillot2012retaining} and, more generally, elementwise functions preserving this property are very limited \citep{guillot2015functions}.

\paragraph{Unrolled neural architectures} In order to ensure that a network's output strictly respects some desired properties, a successful direction is to unroll convex optimization algorithms \citep{chen2022learning,shlezinger2023model}.
This unrolling procedure acts as an ``inductive bias'' on the architecture, naturally forcing the model to explore suitable solutions within the space of imposed properties.
In SPD learning, this approach has been exploited by \citet{shrivastava2020glad}  who unrolled an optimization algorithm to train neural networks to estimate inverses of covariance matrices.
Whilst algorithm unrolling is a very powerful approach to learn specific matrices, it proves difficult to actually enforce several constraints simultaneously on these matrices.
We provide further details about the limits of this approach in \Cref{subsec:sparse_precision_matrices}.

\section{The SpodNet framework}
\label{sec:our_framework}

\begin{minipage}[t]{0.48\linewidth}
  We now introduce our core contribution, the SpodNet layer.
  Essentially, it is an SPD-to-SPD mapping parameterized by neural networks.
  A SpodNet layer operates by cycling through the $p$ column-row pairs individually by
  \begin{enumerate}[label=(\roman*)]
    \item updating the corresponding column-row with a neural network,
    \item updating the diagonal element to satisfy the SPD constraint (see \Cref{fig:spodnet}).
  \end{enumerate}
  Crucially, the neural network used at step \textit{(i)} can update the column-row to \emph{any} value without compromising the SPD guarantee.
  Consequently, one is free to exploit a spectrum of approaches for those updates, depending on other desired structural properties of the output matrix.
  In particular, we describe in \Cref{sec:learning_precision_matrices} three specific implementations of SpodNet that enforce elementwise sparsity for learning sparse precision matrices.
\end{minipage}
\hfill
\begin{minipage}[t]{0.5\textwidth}
  \vspace{-0.7cm}
  \begin{algorithm}[H]
    \caption{The SpodNet layer \label{alg:spodnet}}
    \begin{algorithmic}[1]
      \STATE Input: $\bTheta_\mathrm{in} \in \mathbb{S}_{++}^p$ and $\bW_\mathrm{in} =\bTheta_\mathrm{in}^{-1}$
      \FOR{column $i \in \{1, \cdots, p\}$}
      \STATE Extract blocks: $\bW_{11}$, $\bw_{12},w_{22}$
      \STATE Compute $[\bTheta_{11}]^{-1} = \bW_{11} - \frac{1}{w_{22}} \bm{w}_{12} \bm{w}_{12}^\top$
      \STATE New column $\btheta_{12}^{+} = f(\bTheta)$
      \STATE New diagonal value $\theta_{22}^+ = g(\bTheta) + {\btheta_{12}^{+}}^{\top}[\bTheta_{11}]^{-1} \btheta_{12}^{+}$
      \STATE Update $\bTheta= \bTheta^+, \bW = \bW^{+}$ as in \Cref{eq:framework-Theta-update,eq:framework-W-update}
      \ENDFOR
      \STATE Output: $\bTheta_\mathrm{out} \in \mathbb{S}_{++}^p$ and $\bW_\mathrm{out} = \bTheta_\mathrm{out}^{-1}$
    \end{algorithmic}
  \end{algorithm}
  \includegraphics[width=\linewidth]{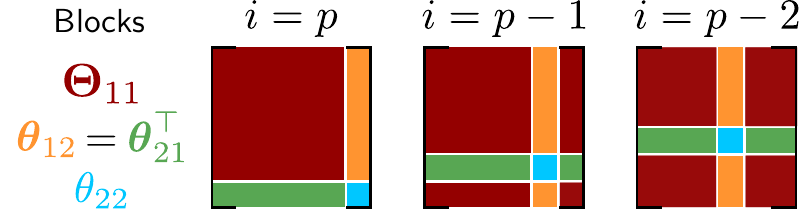}
\end{minipage}

\subsection{Algorithmic foundations}
\label{subsec:algorithmic_foundations}
The key to preserving the positive-definiteness $\bTheta \in \mathbb{S}_{++}^p$ of the matrix upon after changing its $i$-th column and row is an appropriate update of the diagonal entry $\Theta_{ii}$ based on Schur's condition for positive-definiteness.
In the following, a $+$ superscript on a variable (e.g. $\btheta^+$) indicates an update value of this variable (e.g. outputted by a neural network).
The following proposition shows that SpodNet's output is guaranteed to be SPD.

\begin{proposition}\label{prop:spodnet_spd}
  Suppose the updated column-row pair is the last one ($i = p)$.
  We partition $\bTheta$ as
  \begin{equation}
    \label{eq:theta-partitioning}
    \bTheta =
    \begin{bmatrix}
      \bTheta_{11} & \btheta_{12} \\
      \btheta_{21} & \theta_{22}
    \end{bmatrix},
    \quad \text{with} \quad
    \bTheta_{11} \in \bbR^{(p-1)\times(p-1)}, \,
    \btheta_{12} \in \bbR^{p-1}, \ \btheta_{21} = \btheta_{12}^\top, \,
    \theta_{22} \in \bbR\,.
  \end{equation}
  (for a generic column $i$, $\bTheta_{11}$ refers to $\bTheta$ without its $i$-th row and $i$-th column, $\btheta_{12}$ is the $i$-th row of $\bTheta$ without its $i$-th value, and $\theta_{22}$ is $\Theta_{ii}$ as illustrated below \Cref{alg:spodnet}).

  Suppose that $\bTheta \in \mathbb{S}_{++}^p$.
  Let $\bu \in \bbR^{p-1}$ and $v > 0$ be any vector and strictly positive scalar respectively.
  Then, updating the $i$-th row and column of $\bTheta$ as
  \begin{equation}\label{eq:framework-Theta-update}
    \bTheta^+ \triangleq\begin{bmatrix}
      \bTheta_{11} & \btheta^+_{12} \triangleq  \bu                                  \\
      \btheta_{21}^+ \triangleq \bu^\top
                   & \theta_{22}^+ \triangleq v + \bu^\top [{\bTheta}_{11}]^{-1} \bu
    \end{bmatrix}
    \, ,
  \end{equation}
  preserves the SPD property, i.e. $\bTheta^+ \in \mathbb{S}_{++}^p$.
\end{proposition}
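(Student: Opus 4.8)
The plan is to prove positive-definiteness of $\bTheta^+$ directly from the Schur complement characterization of the SPD cone, which is exactly the condition the algorithm's diagonal update is built around. Recall that a symmetric block matrix $\begin{bmatrix} A & B \\ B^\top & C \end{bmatrix}$ with $A \succ 0$ is positive definite if and only if its Schur complement $C - B^\top A^{-1} B$ is strictly positive. The entire argument hinges on noticing that the update of $\theta_{22}$ was reverse-engineered so that this Schur complement collapses to the free scalar $v$.

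First I would establish that $\bTheta_{11}$ is itself SPD. This is immediate: $\bTheta_{11}$ is the principal submatrix of $\bTheta \in \mathbb{S}_{++}^p$ obtained by deleting the last row and column, and any principal submatrix of an SPD matrix is SPD (restrict the quadratic form $\vx^\top \bTheta \vx$ to vectors supported on the first $p-1$ coordinates). In particular $\bTheta_{11}$ is invertible, so $[\bTheta_{11}]^{-1}$, and hence the prescribed value $\theta_{22}^+ = v + \bu^\top [\bTheta_{11}]^{-1} \bu$, are well defined.

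Next I would apply the Schur condition to $\bTheta^+$ under the block identification $A = \bTheta_{11}$, $B = \bu$, $C = \theta_{22}^+$. The matrix $\bTheta^+$ is symmetric by construction (the off-diagonal block $\bu$ is mirrored by $\bu^\top$), and its top-left block $A = \bTheta_{11}$ is SPD by the previous step. Computing the Schur complement gives
\[
  C - B^\top A^{-1} B = \theta_{22}^+ - \bu^\top [\bTheta_{11}]^{-1} \bu = \big(v + \bu^\top [\bTheta_{11}]^{-1} \bu\big) - \bu^\top [\bTheta_{11}]^{-1} \bu = v > 0,
\]
where the cancellation is precisely the purpose of the diagonal update. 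Since $A \succ 0$ and the Schur complement equals $v > 0$, the Schur condition yields $\bTheta^+ \in \mathbb{S}_{++}^p$.

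There is no genuine obstacle here: the update rule is designed so that the verification reduces to the single inequality $v > 0$, and the main ``work'' is simply invoking the right characterization. The only subtlety worth flagging is that the statement fixes $i = p$; for a generic column $i$ one conjugates by the permutation matrix swapping indices $i$ and $p$, which acts as a symmetric congruence and therefore preserves membership in $\mathbb{S}_{++}^p$, reducing the general case to the one proved above.
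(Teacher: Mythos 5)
Your proof is correct and follows essentially the same route as the paper's: both establish that $\bTheta_{11}$ is SPD as a principal submatrix, then apply the Schur complement characterization and observe that the complement collapses to $v > 0$ by construction of $\theta_{22}^+$. Your closing remark on reducing a generic column $i$ to the case $i=p$ via a permutation congruence is a useful addition but does not change the argument.
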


Since positivity of $\bTheta^+$ is guaranteed for any choice of $\bu$ and $v$ as long as $v > 0$, the principle of SpodNet updates is to \emph{learn} these quantities as function of the current iterate, i.e. from the value of $\bTheta$, $\bTheta_{11}$, $\btheta_{12}$, etc (explicit forms for each variant will be given in \Cref{subsec:spodnet_for_graph_learning}).
For simplicity, though they can depend on additional information, we write $\bu = f(\bTheta)$ and $v = g(\bTheta)$, with $f$ and $g$ being learned mappings.

\begin{proof}
  Since $\bTheta$ is symmetric positive-definite, so is its leading principal submatrix $\bTheta_{11}$.
  It follows that $\bTheta^+$ is well-defined, and obviously symmetric.
  Its positive-definiteness ensues from Schur's condition for positive-definiteness.
  Indeed $\bm{\Theta^+}$ is SPD when $\bTheta_{11} \succ 0$  and $\theta_{22}^{+} - \btheta_{12}^{+\top} [\bTheta_{11}]^{-1} \btheta_{12}^{+} > 0 $ \citep[Theorem 1.12]{zhang2006schur}.
  The latter condition is ensured as the left hand side is equal to $v$, which is strictly positive by assumption.
\end{proof}

Finally, one SpodNet layer chains $p$ updates of the form \Cref{eq:framework-Theta-update}, sequentially updating all column-row pairs one after the other as summarized in \Cref{alg:spodnet}.
A full SpodNet architecture then stacks $K$ SpodNet layers, and, in practice, the first layer takes as input $\bTheta_\mathrm{in} = \left( \bS + \mathbf{I}_p \right)^{-1}$ where $\bS$ is the empirical covariance matrix.
The overall architecture is schematized in \Cref{fig:spodnet} for $K=2$.

The expressivity of SpodNet comes from the flexibility to use \emph{any} arbitrary functions $f$ and $g$ in the updates whilst guaranteeing that $\bTheta$ remains in the SPD cone.
Namely, additional structure such as sparsity can trivially be imposed on $\bTheta$ through $f$.
Broadly speaking, constraints related to the values of the off-diagonal entries of $\bTheta$ and that can be imposed on the outputs of a neural network can be controlled directly.
We next detail the computational cost of fully updating $\bTheta$ through one SpodNet layer.

\subsection{Improving the update complexity}
\label{subsec:improving_update_complexity}
The diagonal update of \Cref{eq:framework-Theta-update} a priori requires inverting the matrix $\bTheta_{11}$ which comes with a prohibitive cost of $\BigO(p^3)$ for each column-row update.
Fortunately, we are able to leverage the column-row structure of the updates to decrease the cost to a mere $\BigO(p^2)$, by maintaining the matrix $\bW = \bTheta^{-1}$ up-to-date along the iterations.

\begin{proposition}
  Let $\bW$ be the inverse of $\bTheta$, adopting the same block structure
  $        \bW =
    \begin{bmatrix}
      \bW_{11}         & \bm{w}_{12} \\
      \bm{w}_{12}^\top & w_{22}
    \end{bmatrix}$.
  Then $[\bTheta_{11}]^{-1} = \bW_{11} - \frac{1}{w_{22}} \bm{w}_{12} \bm{w}_{12}^\top$.
  In addition, if $\bTheta$ is updated as $\bTheta^+$ following \Cref{eq:framework-Theta-update} with $v = g(\bTheta)$, then the update of $\bW$ defined by
  \begin{align}\label{eq:framework-W-update}
    \bW^+ \triangleq
    \begin{bmatrix}
      [{\bTheta}_{11}]^{-1} + \frac{[{\bTheta}_{11}]^{-1} \btheta_{12}^+ \btheta_{21}^+ [{\bTheta}_{11}]^{-1}}{g(\bTheta)}
       & -\frac{[{\bTheta}_{11}]^{-1} \btheta_{12}^+}{g(\bTheta)}
      \\
      \left( -\frac{[{\bTheta}_{11}]^{-1} \btheta_{12}^+}{g(\bTheta)} \right)^\top
       & 1/g(\bTheta)
    \end{bmatrix}
    \, .
  \end{align}
  can be computed in $\cO(p^2)$ and satisfies $[\bW^{+}]^{-1}= \bTheta^{+}$.
\end{proposition}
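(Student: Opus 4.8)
The plan is to prove both claims by standard block matrix inversion (Schur complement) identities; the only nontrivial observation is that the diagonal update in \Cref{eq:framework-Theta-update} is designed precisely so that a Schur complement collapses to $g(\bTheta)$. For the first identity, I would apply the block inverse formula to $\bW$ itself rather than to $\bTheta$. Since $\bTheta = \bW^{-1}$, the leading $(p-1)\times(p-1)$ block of $\bW^{-1}$ equals the inverse of the Schur complement of $w_{22}$ in $\bW$, i.e. $\bTheta_{11} = (\bW_{11} - \frac{1}{w_{22}} \bm{w}_{12} \bm{w}_{12}^\top)^{-1}$. Inverting both sides gives $[\bTheta_{11}]^{-1} = \bW_{11} - \frac{1}{w_{22}} \bm{w}_{12} \bm{w}_{12}^\top$, as claimed. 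This only requires $\bTheta_{11}$ to be invertible, which holds because $\bTheta \in \mathbb{S}_{++}^p$ (already used in \Cref{prop:spodnet_spd}, where $\bTheta_{11} \succ 0$).

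For the second claim I would compute $[\bTheta^+]^{-1}$ directly from the block inverse formula applied to $\bTheta^+$. The $(1,1)$ block of $\bTheta^+$ is unchanged and equal to $\bTheta_{11}$, so the relevant quantity is the Schur complement of $\bTheta_{11}$ in $\bTheta^+$, namely $s \triangleq \theta_{22}^+ - \btheta_{21}^+ [\bTheta_{11}]^{-1} \btheta_{12}^+$. The key step is to substitute $\theta_{22}^+ = v + \btheta_{12}^{+\top}[\bTheta_{11}]^{-1}\btheta_{12}^+$ with $v = g(\bTheta)$: the two quadratic terms cancel and $s = g(\bTheta)$. Plugging $s = g(\bTheta)$ into the block inverse formula then reproduces each block of $\bW^+$ exactly — the $(2,2)$ entry $1/g(\bTheta)$, the off-diagonal block $-[\bTheta_{11}]^{-1}\btheta_{12}^+/g(\bTheta)$ together with its transpose, and the $(1,1)$ block $[\bTheta_{11}]^{-1} + [\bTheta_{11}]^{-1}\btheta_{12}^+\btheta_{21}^+[\bTheta_{11}]^{-1}/g(\bTheta)$ — so that $[\bW^+]^{-1} = \bTheta^+$.

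For the complexity bound, I would note that the only a priori cubic operation is forming $[\bTheta_{11}]^{-1}$, but the first identity expresses it as a rank-one correction of $\bW_{11}$, computable in $\cO(p^2)$. The remaining ingredients of \Cref{eq:framework-W-update} are the matrix-vector product $[\bTheta_{11}]^{-1}\btheta_{12}^+$ and an outer product of the resulting vector, both $\cO(p^2)$, so the whole update is $\cO(p^2)$ as opposed to the naive $\BigO(p^3)$. I expect the main obstacle to be purely bookkeeping — keeping the transposes and block positions consistent, in particular checking that the lower-left block produced by the block inverse formula indeed matches the transpose written in \Cref{eq:framework-W-update} — rather than any conceptual difficulty, since the algebra is entirely driven by the cancellation $s = g(\bTheta)$.
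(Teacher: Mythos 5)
Your proof is correct and follows essentially the same route as the paper: both claims come down to the Banachiewicz/Schur block-inversion formula, with the decisive observation that the choice $\theta_{22}^+ = g(\bTheta) + \btheta_{21}^+[\bTheta_{11}]^{-1}\btheta_{12}^+$ makes the Schur complement of $\bTheta_{11}$ in $\bTheta^+$ collapse to $g(\bTheta)$, and the complexity bound follows from the rank-one expression for $[\bTheta_{11}]^{-1}$. The only (cosmetic) difference is in the first identity: you apply the block-inverse formula to $\bW$ and read off $\bTheta_{11}$ as the inverse of the Schur complement of $w_{22}$, whereas the paper inverts $\bTheta$ and identifies the blocks of $\bW$ in the resulting expression before cancelling — the same lemma applied in the opposite direction.
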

\begin{proof}
  By the Banachiewicz inversion formula on Schur's complement \citep[Thm 1.2]{zhang2006schur},
  \begin{align}
    \begin{bmatrix}
      \bW_{11}    & \bm{w}_{12} \\
      \bm{w}_{21} & w_{22}
    \end{bmatrix}
     & =     \begin{bmatrix}
      \bTheta_{11} & \btheta_{12} \\
      \btheta_{21} & \theta_{22}
    \end{bmatrix}^{-1}
    = \begin{bmatrix}
      [{\bTheta}_{11}]^{-1} + \frac{[{\bTheta}_{11}]^{-1} \btheta_{12} \btheta_{21} [{\bTheta}_{11}]^{-1}}{\theta_{22} - \btheta_{21} [{\bTheta}_{11}]^{-1} \btheta_{12}}
       & -\frac{[{\bTheta}_{11}]^{-1} \btheta_{12}}{\theta_{22} - \btheta_{21} [{\bTheta}_{11}]^{-1} \btheta_{12}}
      \\\\
      \left( -\frac{[{\bTheta}_{11}]^{-1} \btheta_{12}}{\theta_{22} - \btheta_{21} [{\bTheta}_{11}]^{-1} \btheta_{12}} \right)^\top
       & \frac{1}{\theta_{22} - \bm{{\theta}}_{21} [{\bTheta}_{11}]^{-1} \btheta_{12}}
    \end{bmatrix}\,.
  \end{align}
  Identifying all blocks (namely $g(\Theta)$ with $\theta_{22} - \theta_{21} [\Theta_{11}]^{-1} \theta_{12}$, $\bw_{12}$ with $-\frac{[{\bTheta}_{11}]^{-1} \btheta_{12}}{\theta_{22} - \btheta_{21} [{\bTheta}_{11}]^{-1} \btheta_{12}}$ and $w_{22}$ with $\frac{1}{g(\bTheta)}$) yields
  $[\bTheta_{11}]^{-1} = \bW_{11} - \frac{1}{w_{22}} \bm{w}_{12} \bm{w}_{12}^\top$
  which can be computed in $\BigO(p^2)$ if one has access to $\bW$.
  This can be achieved using \Cref{eq:framework-W-update}, and involves only operations in $\cO(p^2)$.
  The property $\bTheta^{+} = [\bW^{+}]^{-1}$ is satisfied by using the same inversion formula.
\end{proof}

A full update of $\bTheta$ can thus be achieved with cost $\BigO(p^3)$ ($p$ column-row updates of cost $\BigO(p^2)$).

\section{Using SpodNet to learn sparse precision matrices}
\label{sec:learning_precision_matrices}

So far, we have not specified which choices of column and diagonal updates $f(\bTheta)$ and $g(\bTheta)$ we would use in practice.
We now leverage the general framework of the SpodNet layer for learning SPD matrices with additional structure and propose three specific architectures for learning {\bf sparse and SPD matrices}, with applications to sparse precision matrix learning.

\subsection{Inferring sparse precision matrices}
\label{subsec:sparse_precision_matrices}

Consider a dataset of $n$ observed signals $\bx_1, \ldots, \bx_n$ where each $\bx_i \in \bbR^{p}$ follows a certain (centered) distribution with covariance matrix $\bSigma \in \mathbb{S}^p_{++}$.
The problem of precision matrix estimation arises when attempting to identify the conditional dependency graph of the $p$ variables of this dataset.
In Gaussian graphical models \citep{lauritzen1996graphical}, this graph is associated with the so-called precision matrix $\bTheta = \bSigma^{-1} \in \mathbb{S}^p_{++}$.
In this case $\Theta_{ij} = 0$ iff the variables $i$ and $j$ are conditionally independent given the other variables.
This estimation problem involves determining $\bTheta$ given the empirical covariance matrix $\bS = \frac{1}{n} \sum_{i=1}^n \bx_i \bx_i^\top$.
In most practical scenarios,
the matrix $\bTheta$ is sparse due to limited conditional dependencies, which leads to a sparse SPD matrix estimation problem.
For this problem, a very popular estimator is the Graphical Lasso (GLasso)
\citep{banerjee2008model, friedman2008sparse} that solves
\begin{align}\label{eq:graphical_lasso}
  \min_{\bTheta \succ 0} \ -\log\det(\bTheta) + \langle \bS, \bTheta\rangle + \lambda \Vert \bTheta \Vert_{1, \mathrm{off}} \,,
\end{align}
where $\Vert \bTheta \Vert_{1, \mathrm{off}}$ denote the off-diagonal $\ell_1$ norm of $\bTheta$, equal to $\sum_{i \neq j} |\Theta_{ij}|$.
The data fidelity term $-\log\det(\bTheta) + \langle \bS, \bTheta \rangle$ is the negative log-likelihood under Gaussian assumption, while the $\ell_1$ penalty enforces sparsity.
It can be efficiently computed \citep{rolfs2012iterative,mazumder2012graphical,hsieh2014quic} but suffers from the known limitations from the model-based approaches \citep{adler2018learned,arridge2019solving}.
To circumvent these limitations, two learning-based approaches have been proposed.
\citet{belilovsky2017learning} introduced DeepGraph, a CNN that directly maps empirical covariance matrices to the graph structures (i.e. to the support of the precision matrix).
We emphasize that this model \emph{does not estimate} $\bTheta$ but only its support.
\citet{shrivastava2020glad} proposed GLAD, a neural architecture based on unrolling an alternating minimization algorithm for solving the GLasso.
Based on a Lagrangian relaxation, GLAD iterates over two matrices $\bZ$ and $\bTheta$ while learning step-size and thresholding parameters (see \Cref{subsec:glad_update_rules} for more details).
Out of these two matrices $\bZ$ is sparse and $\bTheta$ is SPD, but GLAD fails to learn a single matrix that is both sparse and SPD.
This limitation is highlighted in \Cref{fig:glad_shortcomings} showing that, during training, $\bTheta$ and $\bZ$ differ, $\bTheta$ is not sparse and $\bZ$ is not SPD.

\begin{figure}[t]
  \centering
  \includegraphics[width=0.7\linewidth]{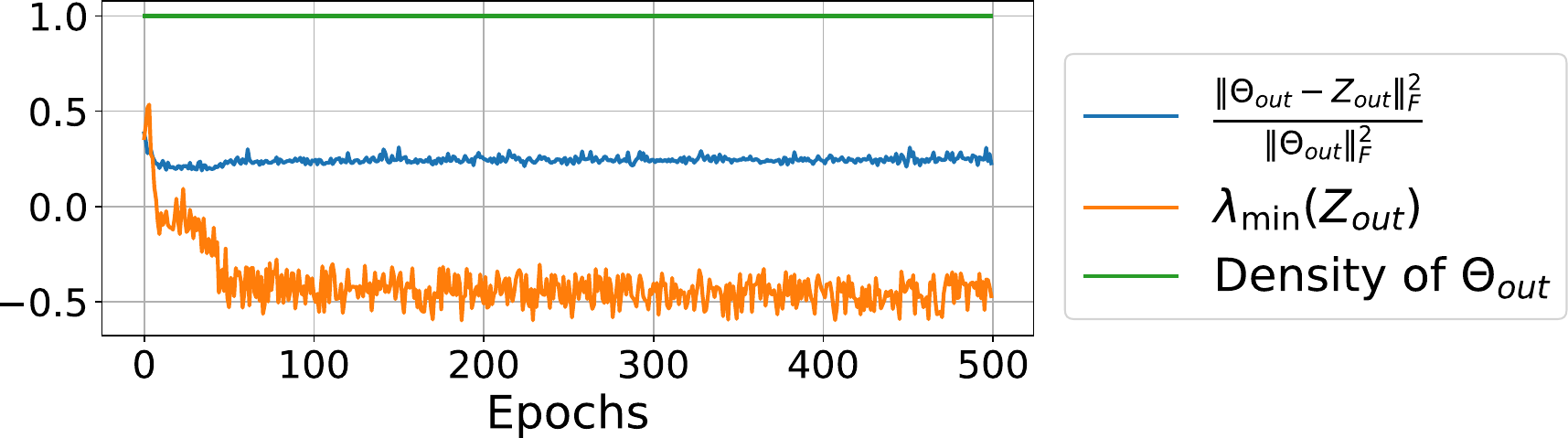}
  \caption{
    \textbf{GLAD's limitations}. Smallest eigenvalue (orange), density degree (green) and relative discrepancy of the two matrices (blue) for an output $(\bZ_\mathrm{out}, \bTheta_\mathrm{out})$ of GLAD.
    $\bTheta_{\mathrm{out}}$ and $\bZ_{\mathrm{out}}$ are different, $\bTheta_\mathrm{out}$ is not sparse, and $\bZ_\mathrm{out}$ is not positive-definite.
  }
  \label{fig:glad_shortcomings}
\end{figure}

\subsection{SpodNet for sparse SPD learning}
\label{subsec:spodnet_for_graph_learning}

We now show how SpodNet overcomes the limitations of current methods for learning both sparse and SPD matrices.
By order of complexity, we introduce three new neural architectures, each corresponding to a specific choice of the functions $f, g$ for learning the values of $\bu$ and $v$ in \Cref{eq:framework-Theta-update}.

Our choices for $f, g$ are inspired by an unrolling of a proximal block coordinate descent applied to the GLasso problem, where the blocks are column-row pairs described in \Cref{sec:our_framework}.
A proximal coordinate gradient descent step on the GLasso objective updates the value of $\btheta_{12}$ with
\begin{equation}\label{eq:block_graphical_ista}
  \btheta_{12}^+ = \ST_{\gamma \lambda} (\btheta_{12} - \gamma (\bm{s}_{12} - \bm{w}_{12}))\,,
\end{equation}
where $\gamma > 0$ is a step-size.
This stems from the fact that the gradient of $\bTheta \mapsto -\log\det(\bTheta) + \langle \bS, \bTheta \rangle$is $- \bTheta^{-1} + \bS$ \citep[Section A.4.1]{boyd2004convex}, which restricted to the $\btheta_{12}$-block gives $- [\bTheta^{-1}]_{12} + \bm{s}_{12} = - \bm{w}_{12} + \bm{s}_{12}$.
Together with the fact that the proximal operator of the $\ell_1$ norm is the soft-thresholding \citep{parikh2014proximal} we get \Cref{eq:block_graphical_ista}.
Because (full) proximal gradient descent on the GLasso is called Graphical ISTA \citep{rolfs2012iterative}, we coin \Cref{eq:block_graphical_ista} Block-Graphical ISTA. We can now introduce our three architectures.

\paragraph{Unrolled Block Graphical-ISTA (UBG)}
First, we propose to learn the stepsizes and soft-thresholding levels when unrolling the Block Graphical ISTA iterations \Cref{eq:block_graphical_ista}.
In a learning-based approach, we use as updating function $f$,
\begin{align}
  f_\mathrm{UBG} : \btheta_{12}\mapsto \ST_{\blambda^{+}} (\btheta_{12} - \gamma^+ (\bm{s}_{12} - \bm{w}_{12})) \, ,
\end{align}
where the step-size $\gamma^{+} = \NN_1(\btheta_{12}) > 0$ and the soft-thresholding parameters $\blambda^+ = \NN_2(\btheta_{12} - \gamma^+ (\bm{s}_{12} - \bm{w}_{12})) \in \bbR^{p-1}$.
$\NN_1$ and $\NN_2$ are small multilayer perceptrons with architectures provided in \Cref{app:architectures}.
We emphasize that this is different from hyperparameter tuning, since $\gamma^+$ and $\blambda^+$ are predicted at each update instead of being global parameters.
UBG exhibits the highest inductive bias among the models, as its architecture is directly derived from unrolling an iterative optimization algorithm designed to minimize a model-based loss.

As highlighted in \Cref{sec:our_framework}, one can plug \emph{any} update functions and still get SPD outputs.
This motivates extending UBG to a more expressive architecture.

\paragraph{Plug-and-Play Block Graphical-ISTA (PNP)}
Precisely, we extend UBG to a Plug-and-Play-like setting \citep{venkatakrishnan2013plug,romano2017little}.
In a nutshell, these methods replace the proximal operator in first-order algorithms by a denoiser, usually implemented by a neural network.
In our context, this corresponds to replacing the soft-thresholding of UBG by a neural network $\Psi : \bbR^{p-1} \to \bbR^{p-1}$, that is,
\begin{align}
  f_\mathrm{PnP} : \btheta_{12} \mapsto \Psi(\btheta_{12} - \gamma^+ (\bm{s}_{12} - \bm{w}_{12})) \, .
\end{align}

To promote sparsity of the output of $f_\mathrm{PnP}$, the last layer of $\Psi$ performs an elementwise soft-thresholding.
The parameter for this soft-thresholding is also learned from data and given by the same multilayer perceptron that predicts $\blambda^{+}$ in UBG (\Cref{app:architectures}).

\paragraph{End-to-end updates (E2E)}
Finally, we propose a fully-flexible architecture without any algorithm-inspired assumptions. Precisely, we consider
\begin{align}
  f_\mathrm{E2E} : \btheta_{12} \mapsto \Phi(\btheta_{12})\,,
\end{align}
where $\Phi$ takes in the current state of the column $\btheta_{12}$ and learns to predict an adequate column update $\btheta_{12}^+$.
Intuitively, the neural network $\Phi$ acts as learning both the forward and the backward steps of a forward-backward iteration \citep{combettes2011proximal}.
As for PNP, sparsity of the predictions is enforced by a soft-thresholding non-linearity in the last layer of $\Phi$.
\begin{figure}[t]
  \centering
  \includegraphics[width=\textwidth]{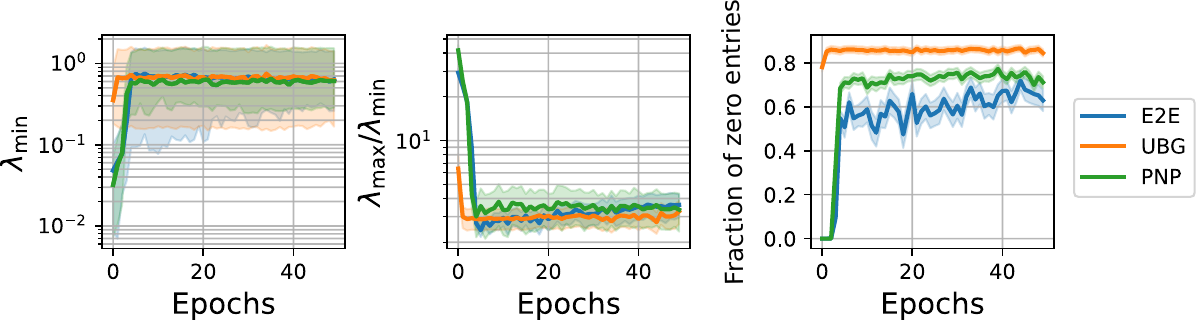}
  \caption{Training dynamics of our 3 models (on test data described in \Cref{subsec:spodnet_synthetic_data}).  \emph{Left:} The outputs of our 3 models remain positive-definite. \emph{Middle:} The conditioning remains stable. \emph{Right:} The outputs are sparse. Overall our models produce {\bf jointly sparse + SPD outputs}.\label{fig:eigmins_eigmaxs}}
\end{figure}

Finally, for all models, we use for $g$ a small neural network that takes as input $\theta_{22}, s_{22}$ and $(\btheta_{12}^+)^\top [\bTheta_{11}]^{-1} \btheta_{12}^+$.
Its positivity is ensured by using an absolute value function as final nonlinearity.
These input features are based on the intuition that the network should exploit information from the current state of the diagonal of $\bm{\Theta}$, the diagonal of $\bm{S}$ which is closely related to the diagonal of the GLasso estimator at convergence, and Schur’s complement which is added to the output of $g$.

\paragraph{Improving training stability}
Although the positive definiteness of $\bTheta$ is guaranteed to be preserved at each update for any positive-valued function $g$, we have empirically observed that its smallest eigenvalue could approach $0$ as visible in \Cref{subsec:appendix_spodnet_instabilities}.
In practice this leads to instability in training.
Below, we provide an interpretation of this phenomenon and provide a solution to address it.
Each column-row update inside a SpodNet layer can be written as the rank-$2$ update
\begin{align}
  \bTheta^+ =
  \underbrace{\begin{bmatrix}
      \bTheta_{11} & \btheta_{12} \\
      \btheta_{21} & \theta_{22}
    \end{bmatrix}}_{=\bTheta} +
  \underbrace{\begin{bmatrix}
      0                                    & \btheta_{12}^+ - \btheta_{12} \\
      (\btheta_{12}^+ - \btheta_{12})^\top & \theta_{22}^+ - \theta_{22}
    \end{bmatrix}}_{\triangleq \Delta_{\bTheta}} \, .
\end{align}
Moreover, the nonzero eigenvalues of the perturbation $\Delta_{\bTheta}$ are given by
$\lambda_{\pm} = \frac{(\theta_{22}^+ - \theta_{22}) \pm \sqrt{(\theta_{22}^+ - \theta_{22})^2 + 4 \Vert \btheta_{12}^+ - \btheta_{12} \Vert^2}}{2}$, and by the Bauer-Fike theorem, we can quantify the evolution of $\bTheta$'s spectrum by
\begin{align}\label{eq:bauer_fike}
  \vert \lambda_k(\bTheta) - \lambda_k(\bTheta^+)\vert \leq \Vert \Delta_{\bTheta} \Vert_\mathrm{op}\,,
\end{align}
where $\lambda_k$ denotes the $k$-th largest eigenvalue of a SPD matrix.
Hence, one way to ensure that the perturbation of the spectrum is small is to control $\Vert \Delta_{\bTheta} \Vert_\mathrm{op}$, for instance, by limiting the magnitude of the updates.
Experimentally, the most successful approach to control $\Vert \Delta_{\bTheta} \Vert_\mathrm{op}$ is to limit the magnitude of $\theta^{+}_{22}$, which is not a trivial task as the updated value $\theta^+_{22} = g(\bTheta) + \btheta^+_{21} [\bTheta_{11}]^{-1} \btheta^+_{12}$ must satisfy $\theta^+_{22} - \btheta^+_{21} [\bTheta_{11}]^{-1} \btheta^+_{12} > 0$.
Thus, we propose to scale $\btheta_{12}^{+}$ by scaling the preactivation\footnote{Experimentally scaling the preactivation works better than scaling the network's output.} $\bz$ of the last layer of $f$ by $\frac{\sqrt{\zeta}}{\sqrt{\bz^\top [\bTheta_{11}]^{-1} \bz}}$.
The hyperparameter $\zeta > 0$ acts as a form of regularization that handles a compromise between stability and expressivity of the model; in all of our experiments we use $\zeta = 1$.
As visible in \Cref{fig:eigmins_eigmaxs} this scaling ensures a smooth training of the network, with stable condition number of the $\bTheta$ matrix.

\section{Experiments}
\label{sec:experiments}

We now illustrate SpodNet's ability to learn SPD matrices with additional structure by using our three derived graph learning models (UBG, PNP and E2E) to learn sparse precision matrices, on both synthetic and real data.
Additional details regarding the experimental setups can be found in \Cref{app:expe_details}.
\texttt{PyTorch} implementations can be found in our \href{https://github.com/Perceptronium/SpodNet}{GitHub repository}\footnote{\url{https://github.com/Perceptronium/SpodNet}}.

\subsection{Sparse precision matrix recovery}
\label{subsec:spodnet_synthetic_data}

Our goal in this section is to evaluate our models' performance and generalization ability regarding the recovery of sparse precision matrices on synthetic data.
The following experiments serve several purposes: (1) compare learning-based methods against traditional model-based methods, (2) evaluate our models' reconstruction performance in terms of matrix estimation and support recovery.

\paragraph{Data \& training}
We use synthetic data to train our SpodNet models as in \citet{belilovsky2017learning,shrivastava2020glad}.
We generate $N$ sparse SPD $p \times p$ matrices using \texttt{sklearn}'s \texttt{make\_sparse\_spd\_matrix} function \citep{scikitlearn}, of which we ensure proper conditioning by adding $0.1 \cdot \mathbf{I}_p$.
These matrices are treated as ground truth precision matrices $\bTheta_\mathrm{true}^{(i)}$ for $i \in [N]$.
The sparsity degree of each matrix is controlled through the one imposed on their Cholesky factors during their generation: in the \emph{strongly sparse} setting $\bTheta_\mathrm{true}$ has roughly 90 \% of zero entries while in the \emph{weakly sparse} setting this value is around 25 \%.
For each of these $N$ ground truth precision matrices $\bTheta_\mathrm{true}$, we sample $n$ i.i.d. centered Gaussian random vectors $\bx_j \sim \mathcal{N}(0, (\bTheta^{(i)}_\mathrm{true})^{-1})$, which are used to compute an empirical covariance matrix $\bS^{(i)}$.
The dataset thus comprises couples $(\bS^{(i)}, \bTheta^{(i)}_\mathrm{true})$ where each $\bS^{(i)}$ stems from a \emph{different} ground truth precision matrix.
For all the experiments, we generate $N_\mathrm{train} = 1000$ different $(\bS, \bTheta_\mathrm{true})$ couples for the training set and $N_\mathrm{test} = 100$ couples for the testing set on which we validate our models.
Further details on the parameters used during the data generation are in \Cref{app:synthetic_data}.

We train our three models (UBG, PNP and E2E) to minimize a reconstruction error $\mathcal{L}_{\mathrm{MSE}} = \frac{1}{N_\mathrm{train}} \sum^{N_\mathrm{train}}_{i=1} \Vert \hat\bTheta^{(i)} - \bTheta^{(i)}_\mathrm{true} \Vert^2_F$ in the vein of \citet{gregor2010learning,shrivastava2020glad}, where $\hat\bTheta$ is the model's output. All three models are trained using ADAM with default hyperparameters \citep{kingma2014adam}.

\paragraph{Baselines}
We compare our three models to various other approaches throughout our experiments: learning-based, Riemannian and model-based estimators.
As performed in \citet{belilovsky2017learning,shrivastava2020glad}, we use the three most popular methods for estimating precision matrices as our core baselines: the GLasso \citep{friedman2008sparse}, the inverse Ledoit-Wolf estimator \citep{ledoit2004well} and the inverse OAS \citep{chen2010shrinkage}.
The Ledoit-Wolf and OAS estimators are covariance matrix estimators, for which we take the inverse for estimating the precision (we simply refer to them as Ledoit-Wolf and OAS).
All three approaches guarantee the positive-definiteness of the estimated matrix but the GLasso is the only one to additionally ensure sparse predictions.
For the GLasso, for each matrix $\bS^{(i)}$ a sparsity-regulating hyperparameter $\lambda^{(i)}$ is chosen using \texttt{sklearn}'s \texttt{GraphicalLassoCV} cross-validation procedure, that relies only on the samples used in $\bS^{(i)}$.
In the same vein, we use dedicated Ledoit-Wolf and OAS estimators for each $\bS^{(i)}$.
We also compare our approaches to the GLAD model \citep{shrivastava2020glad}, considering both its $\bTheta$ and $\bZ$ outputs.
We recall that $\bZ$ is not guaranteed to be positive-definite and its validity as a precision matrix estimator is therefore debatable.
GLAD is trained with the same loss, data and optimizer as our models.

We compare the performance of the different approaches in terms of Normalized MSE on the test set: $\mathrm{NMSE} = \frac{1}{N_\mathrm{test}} \sum^{N_\mathrm{test}}_{i=1} \frac{\Vert \hat\bTheta^{(i)} - \bTheta^{(i)}_\mathrm{true} \Vert^2_F}{\Vert \bTheta^{(i)}_\mathrm{true} \Vert^2_F}$ and F1 score for assessing support recovery. For the latter, false positives correspond to $\hat{\bTheta}_{ij} \neq 0$, $(\bTheta_{\mathrm{true}})_{ij} =0$ (and similarly for true positives and false negatives).

As shown in \Cref{fig:nmse_performance}, at convergence, the learning-based methods
show to be more suited for the actual reconstruction of $\bTheta_\mathrm{true}$ as they significantly outperform traditional methods.
This superior performance is especially pronounced as the dimensionality of the data grows and the sparsity of the ground truth decreases.
In the most challenging scenario ($p=100$ and $n=100$ with weakly sparse $\bTheta_\mathrm{true}$), the traditional methods perform poorly, with NMSE around 90\%.

\begin{figure}[t]
  \centering
  \includegraphics[width=0.95\linewidth]{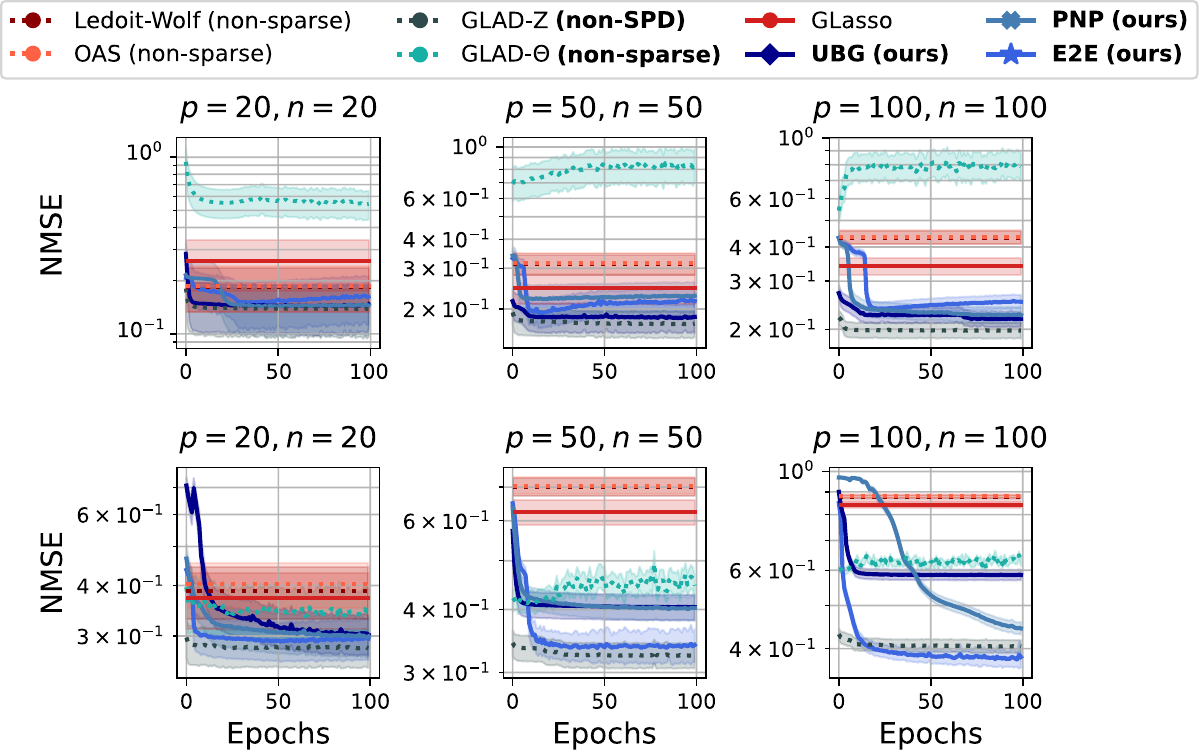}
  \caption{Learning-based (in variations of blue) vs traditional methods (in variations of red).
    Dotted curves indicate when one of the constraints (SPDness or sparsity) is not guaranteed.
    \emph{First row: }Strongly sparse $\bTheta_\mathrm{true}$.
    \emph{Second row: }Weakly sparse $\bTheta_\mathrm{true}$.}
  \label{fig:nmse_performance}
\end{figure}

\begin{figure}[t]
  \centering
  \includegraphics[width=\linewidth]{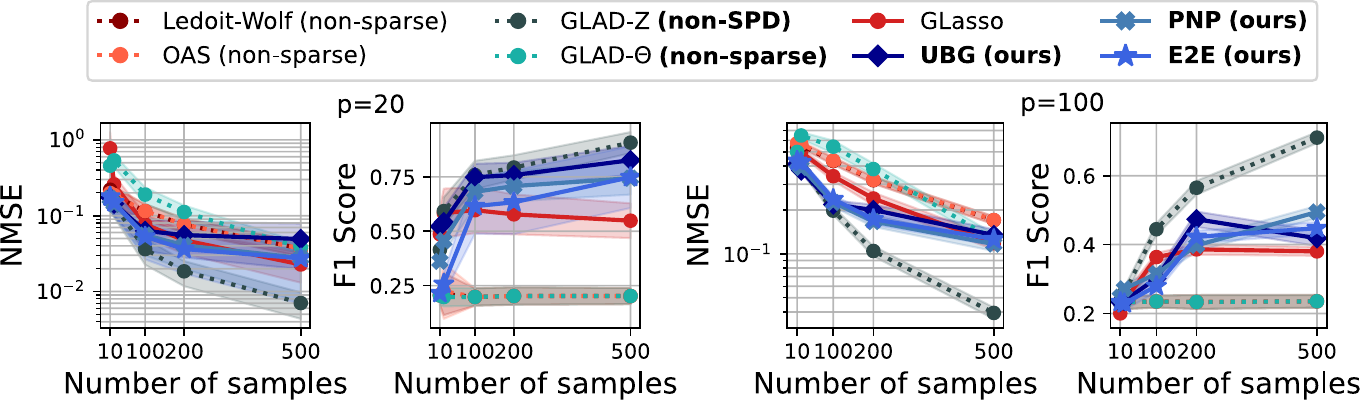}
  \caption{Comparing models in sample deficient regimes ($n \leq p$) up to large-sample regimes ($n \gg p$), evaluated in terms of NMSE and F1 score for support recovery.
    Dotted curves indicate when one of the constraints (SPDness or sparsity) is not guaranteed.
    In large dimension, GLAD's $\bZ$ performs well, but is never SPD; GLAD's $\bTheta$ is SPD, but performs badly.
  }
  \label{fig:sample_efficiency}
\end{figure}

\Cref{fig:nmse_performance} also suggests that our models perform especially well in terms of NMSE in the low-samples regime ($n \leq p$), which is a common setting in various real-world applications, such as neuroscience time-series analysis or financial temporal network inference.
In \Cref{fig:sample_efficiency}, we thus study the influence of the number of samples $n$ on the performance of each method.
The results show that although all methods achieve excellent recovery in the large sample regime (down to around 2.5\% NMSE in the lower dimensional $p=20$ setting), our models are especially more suited for matrix reconstruction in case of sample-deficiency, especially so in the higher dimensional ($p=100$) setting.
In terms of support recovery, our models achieve significantly better F1 scores than the GLasso as the number of sample $n$ grows, with up to 25\% improvement in certain tested settings.
In the lower dimensional setting ($p=20$), we are even able to achieve a F1 score of over 0.75 with our UBG model.
Finally, although GLAD's $\bZ$ performs well, an additional experiment in \Cref{app:glad_non_spd} shows that those matrices turn out to never be SPD in various settings, making them unsuited as precision matrix estimators.

\subsection{Application to unsupervised graph learning on a real-world dataset}
\label{subsec:real_world_data}
We finally evaluate the performance of SpodNet in a graph learning context using a real-world dataset and under an \emph{unsupervised scenario}.
The aim of this experiment is twofold: (1) to assess the generalization capabilities of SpodNet, and (2) to determine if it can yield an accurate graph topology \emph{when trained solely on synthetic data}, following the idea of \citet{belilovsky2017learning}.

\begin{figure}[t]
  \centering
  \includegraphics[width=0.7\linewidth]{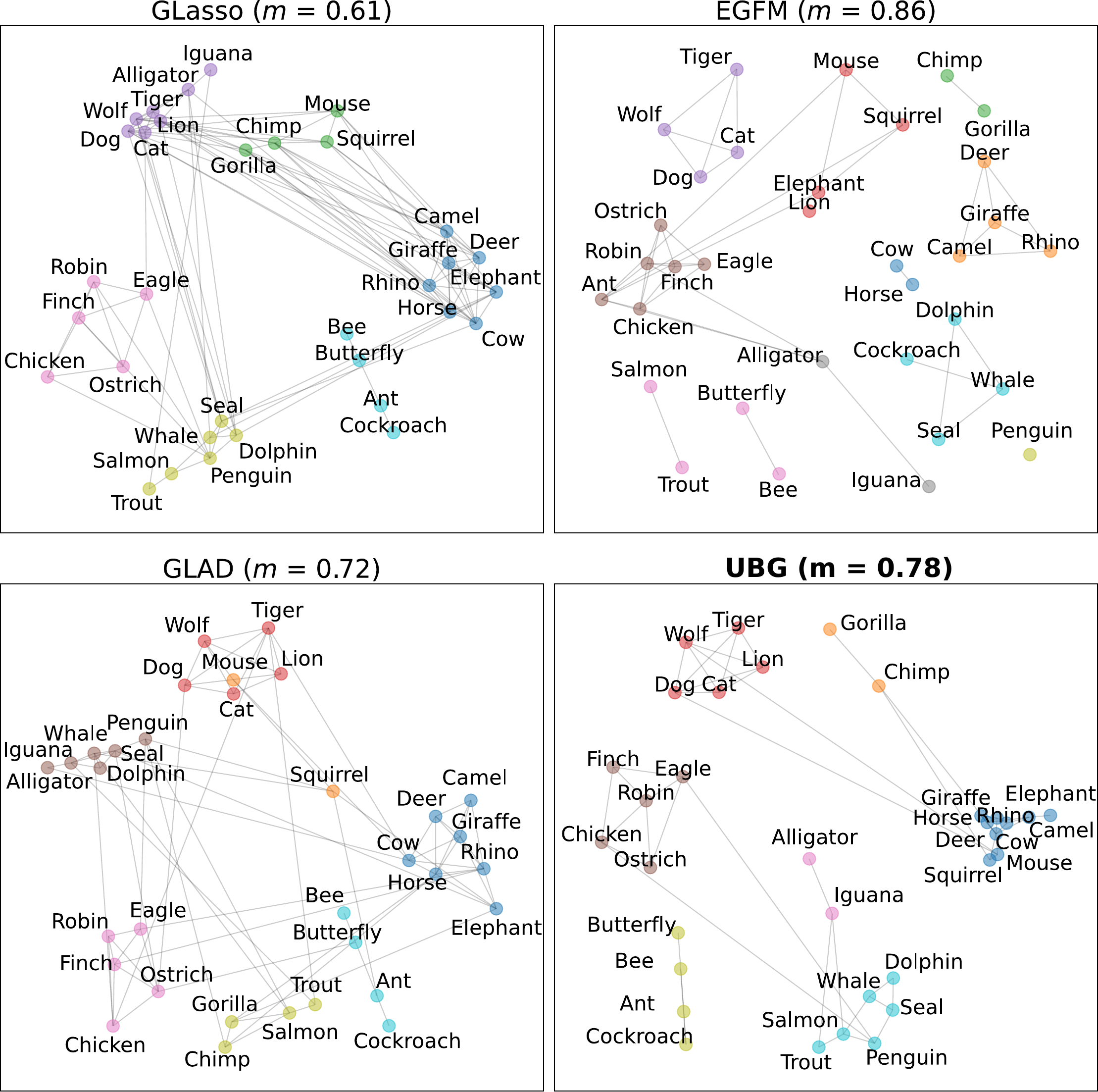}
  \caption{Graph estimation on the Animals dataset with the GLasso, EGFM, GLAD and our UBG model.
    The last three yield cleaner and more interpretable clusters than the GLasso, but EGFM is specifically tailored for finding clusters as opposed to our UBG model or GLAD.}
  \label{fig:animals_data}
\end{figure}

\paragraph{Data \& training}
We consider the Animals dataset \citep{lake2010discovering}, which comprises $p=33$ animal species, each characterized by responses to $n=102$ binary questions (e.g., "Has teeth?", "Is poisonous?").
The objective is to infer a graph of connections between these $p$ animals based on the $n$ responses.
We train our UBG model by minimizing the MSE on the synthetic data detailed in \Cref{subsec:spodnet_for_graph_learning}, with $p=33, n=102$ and various levels of sparsity.
We generate 1000 training matrices and train the model for 100 epochs.
The predicted precision matrix is then computed by inputting the empirical covariance matrix from the Animals dataset into SpodNet.

\paragraph{Baselines}
We compare our approach with the GLasso (with the regularization parameter selected by cross-validation), GLAD (which is trained with the same data as our model), and the Elliptical Graphical Factor Model (EGFM) proposed by \citet{hippert2023learning}.
The latter is a Riemannian optimization method specifically designed to identify clusters within the data by estimating a precision matrix whose inverse is modeled as a low-rank matrix plus a positive diagonal.
For a fair comparison, we adopt the same hyperparameters as outlined in the original paper, setting the rank and regularization parameter to 10.

Each method yields a precision matrix, which is used to construct a graph representing the connections between the $p$ animals. The graph is constructed by considering the absolute values of the precision matrix coefficients, which represent the strength of the connections, and by removing the diagonal elements to eliminate self-loops.
The results are illustrated in \Cref{fig:animals_data}, where the nodes of the graphs are partitioned using the Louvain algorithm \citep{blondel2008fast}. Qualitatively, we observe that UBG produces a coherent graph, with outcomes comparable to those of the three baselines.
Additionally, the graph learned by our UBG model exhibits a cleaner structure compared to GLasso, with a sparser representation and finer clusters, such as the grouping of (\textit{Chimp}, \textit{Gorilla}).
To quantitatively assess the quality of the obtained graphs, we calculate the modularity $m$ of the partitions: higher modularity values indicate better separation of the graph into distinct subcomponents \citep{newman2006modularity}.
We find that the graph produced by UBG achieves a higher modularity ($m=0.78$) than GLasso ($m=0.61$) and GLAD ($m=0.72$), and slightly lower than EGFM ($m=0.86$).
It is important to note that the EGFM is specifically designed to obtain clustered graphs, whereas our UBG model has no such inductive bias.
These combined quantitative and qualitative results demonstrate that SpodNet generalizes effectively to unseen data and produces a coherent graph structure.

\section{Conclusion}
\label{sec:conclusion}

We have proposed \emph{Schur's Positive-Definite Network} (SpodNet), a novel learning module compatible with other standard architectures, offering strict guarantees of SPD outputs.
The principal novelty of SpodNet comes from its ability to handle additional desirable structural constraints, such as elementwise sparsity which we used as an illutrative example throughout this paper.
To the best of our knowledge, SpodNet layers are the first to offer strict guarantees of such highly non-trivial structure in the outputs.
In future works, SpodNet could be leveraged to learn other additional structures beyond sparsity.
We have shown how to leverage SpodNet to build neural architectures that outperform traditional methods in the context of sparse precision matrix estimation in various settings.
Future research will be dedicated to improving the computational cost of our framework, theoretical understanding of the eigenvalues' dynamics during training and of SpodNet's expressivity, and deriving formal convergence guarantees.

\section{Reproducibility statement}

We believe this paper fully discloses the information needed to reproduce the main experimental results.
Our core contribution is the SpodNet layer: its general algorithmic framework is presented in \Cref{alg:spodnet}.
Three specific models built using this framework for learning sparse and SPD matrices are introduced in \Cref{subsec:spodnet_for_graph_learning}, and the exact architectures used for each of them throughout our experiments are explained in detail in \Cref{app:architectures}.
The conducted experiments are described in \Cref{sec:experiments} with additional details in \Cref{app:expe_details}.
\texttt{PyTorch} implementations can be found in our \href{https://github.com/Perceptronium/SpodNet}{GitHub repository}.

\section{Acknowledgements}
We would like to thank Badr Moufad (Ecole Polytechnique, France) for his help with the implementations, Paulo Gonçalves (Inria Lyon, France) for fruitful discussions as well as the Centre Blaise Pascal for computing ressources, which uses the SIDUS solution developed by Emmanuel Quemener (ENS Lyon, France) \citep{quemener2013sidus}.
This work was partially supported by the AllegroAssai ANR-19-CHIA-0009 project.

\newpage
\bibliographystyle{unsrtnat}
\bibliography{./bibliography.bib}
\appendix

\section{Experimental details}
\label{app:expe_details}

\subsection{Synthetic data generation}
\label{app:synthetic_data}

Here we provide additional details on the experiment of \Cref{subsec:spodnet_synthetic_data}.
The sparsity degree of each matrix is controlled through the one imposed on their Cholesky factors during their generation, with the \texttt{alpha} parameter of the function.
Throughout our experiments, we consider \texttt{alpha} $\in \{0.7, 0.95\}$; this is the probability of a $0$ entry on the Cholesky factor of the generated matrix and not the actual fraction of null elements of $\bTheta$.
Although the numbers vary with the dimension $p$, $\alpha=0.95$ roughly leads to 90 \% of zero entries, while $\alpha=0.7$ leads to 25 \% of zero entries.

\subsection{Architectures for UBG, PNP and E2E}
\label{app:architectures}

The function $g$ for all three models is a MLP with two hidden layers of 3 neurons each with a ReLU activation function.
It takes as input $\theta_{22}, s_{22}$ and $(\btheta_{12}^+)^\top [\bTheta_{11}]^{-1} \btheta_{12}^+$.

\paragraph{UBG}
Its $\gamma^+$ parameter is predicted by a MLP that takes as input the current state of the $\btheta_{12}$ block (of dimension $p-1$) and has a single hidden layer of $\lfloor p/2 \rfloor$ neurons with a ReLU activation function.
The (single) output neuron has an absolute value activation, in order to keep it positive since it predicts a step-size.

UBG involves another MLP to learn the vector $\blambda^+$ of elementwise soft-thresholding parameters.
This MLP has a single hidden layer of 5 neurons, with ReLU activation function.
For the same reason as before, the output layer also has an absolute value activation.

\paragraph{PNP}
First, the step-size $\gamma^+$ is predicted by a MLP with the exact same architecture as the one in UBG.
The learned operator $\Psi$ takes as input $\btheta_{12} - \gamma^+ (\bs_{12} - \bw_{12})$, passes it through a single hidden layer of $2p$ neurons, followed by a ReLU activation function, and projects it back into a $p-1$ dimensional vector.

The architecture to predict $\blambda$ is the same as for UBG.
Our experiments show that multiplying its output by $0.1$ helps avoid local minimas and improves convergence.

\paragraph{E2E}
The neural network $\Phi$ is a MLP that takes as input $\btheta_{12}$, passes it through a single hidden layer of $10p$ neurons follow by a ReLU activation function, and projects it back into a $p-1$ dimensional vector.
The architecture to predict $\blambda$ is the same as for UBG and PNP.
Our experiments show that multiplying its output by $0.1$ helps avoiding local minimas and improves convergence.

\paragraph{GLAD} We use the authors' original implementation retrieved from the authors repository at \url{https://github.com/Harshs27/GLAD}, with the default hyperparameters.

All four models are implemented with $K = 1$ layer throughout all the experiments, since our results show that this was enough to yield competitive to outperforming results in the settings under consideration.

\subsection{Details on \Cref{fig:nmse_performance,fig:sample_efficiency}}
\label{app:reconstruction_sample}

For the weakly sparse settings, we use a learning rate of $10^{-3}$ for all three of our own models.
For the strongly sparse settings, we use a learning rate of $10^{-2}$.
GLAD's learning rate is set to $10^{-2}$ in all settings.
All four models are trained on the same 1000 training matrices, using a batch-size of 10, with ADAM's default parameters.
The models are tested on the same 100 matrices as mentioned in \Cref{subsec:spodnet_synthetic_data}.

For \Cref{fig:sample_efficiency}
We use the strongly sparse ground truths in order for the F1 score to be more sensical, and to better illustrate the relevancy of the learned sparsity patterns.
We use a learning rate of $10^{-2}$ for all three of them in every setting for $p=100$, and of $5\cdot10^{-3}, 10^{-2}, 10^{-3}, 3\cdot10^{-2}$ in the $p=20$ setting for respectively $n=10, n=20, n=100, n=200$ and $n=500$.

\section{Additional details on models}

\subsection{SpodNet's potential instabilities without normalization}
\label{subsec:appendix_spodnet_instabilities}

We show in \Cref{fig:spodnet_instability} the evolution of the conditioning of an example of $\bTheta$ during training that becomes unstable if we do not use the stabilization procedure described in \Cref{sec:learning_precision_matrices}.
We observe that although the matrix remains SPD as predicted by \Cref{prop:spodnet_spd} but gets increasingly closer to being singular during the updates.

\begin{figure}[t]
  \centering
  \includegraphics[width=0.7\textwidth]{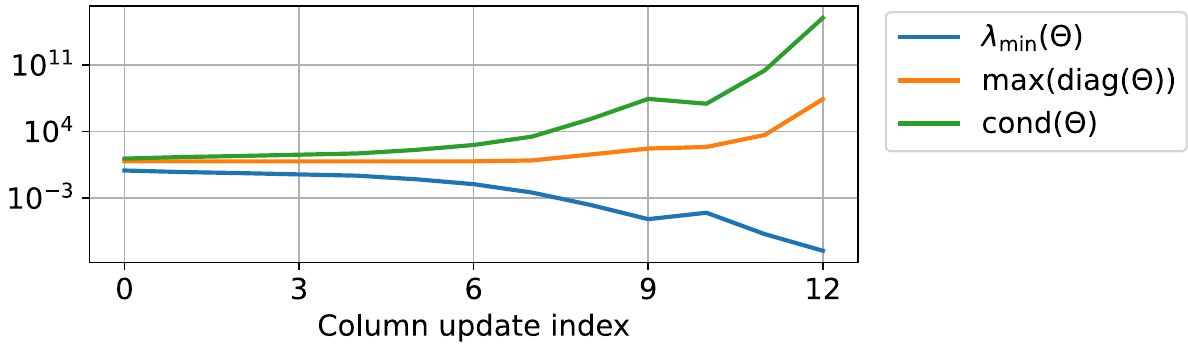}
  \caption{Potential instabilities without normalization. Along column update indices, we plot the smallest eigenvalue (blue), largest diagonal value (orange) and conditioning (green) of a $\bTheta$.}
  \label{fig:spodnet_instability}
\end{figure}

\subsection{GLAD's update rules}
\label{subsec:glad_update_rules}
GLAD unrolls an algorithm that solves the following optimization problem:
\begin{align}
  \min_{\bTheta, \bZ \in \mathbb{S}^p_{++}} - \log\det\left( \bTheta \right) + \langle \bS, \bTheta \rangle + \lambda \Vert \bZ \Vert_1 + \frac{\alpha}{2} \Vert \bZ - \bTheta \Vert_F^2 \, .
\end{align}
Each iteration of GLAD, which can be seen as an individual layer in a deep learning perspective, updates several running variables:
\begin{align}
   & \alpha^{+} = \tilde{f}(\Vert \bm{Z} - \bTheta\Vert_F^2, \alpha)\,,                                                       \\
   & \bm{Y}^{+} = \frac{1}{\alpha^{+}} \bm{S} - \bm{Z}\, ,                                                                    \\
   & \bTheta^{+} = \frac{1}{2}\left( -\bm{Y}^{+} + \sqrt{{\bm{Y}^{+}}^\top \bm{Y}^{+} + \frac{4}{\alpha^{+}} \Id} \right)\, , \\
   & \lambda_{ij}^{+}= \tilde{h}(\Theta_{ij}^+, S_{ij}, Z_{ij}) \,,                                                           \\
   & Z_{ij}^{+} = \ST_{\lambda_{ij}^{+}}\left( \Theta_{ij}^{+} \right)\,, \; \forall i,j \in [p]\,,
\end{align}
in which the functions $\tilde{f} : \bbR^2 \to \bbR$ and $\tilde{h} : \bbR^3 \to \bbR$ are two small neural networks that are trained to predict adequate parameters for each layer.
By construction, $\bTheta^+$ is always SPD and a sparsity structure is enforced on $\bZ^+$, but the converse is not true.

\subsection{GLAD's non-SPD outputs}
\label{app:glad_non_spd}
As \Cref{fig:glad_non_spd} shows, in the setting of \Cref{fig:nmse_performance}, GLAD's best performing outputs $\bZ$ are SPD in virtually $0\%$ if the case strongly sparse case with $p=100$.

Additionally, $\bZ$ loses its sparsity if projected onto the SPD cone and becomes singular (since it is actually projected onto the closed cone of semi-definite matrices).
Going the other way around and trying to sparsify $\bTheta$ instead, by thresholding its off-diagonal entries following the support found by $\bZ$, breaks its positive-definiteness guarantee (\Cref{fig:glad_non_spd_non_sparse}).

\begin{figure}[t]
  \centering
  \includegraphics[width=0.3\textwidth]{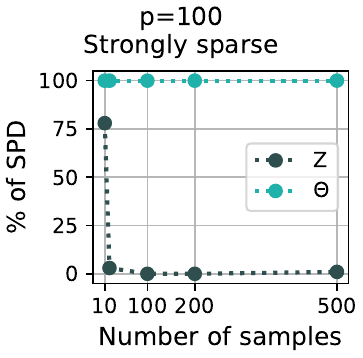}    \caption{
    Percentage of SPD outputs among GLAD's $\bZ$ and $\bTheta$ outputs at convergence. As detailed in \Cref{subsec:sparse_precision_matrices}, $\bZ$ is not guaranteed to be sparse.}
  \label{fig:glad_non_spd}
\end{figure}

\begin{figure}[t]
  \centering
  \includegraphics[width=0.85\textwidth]{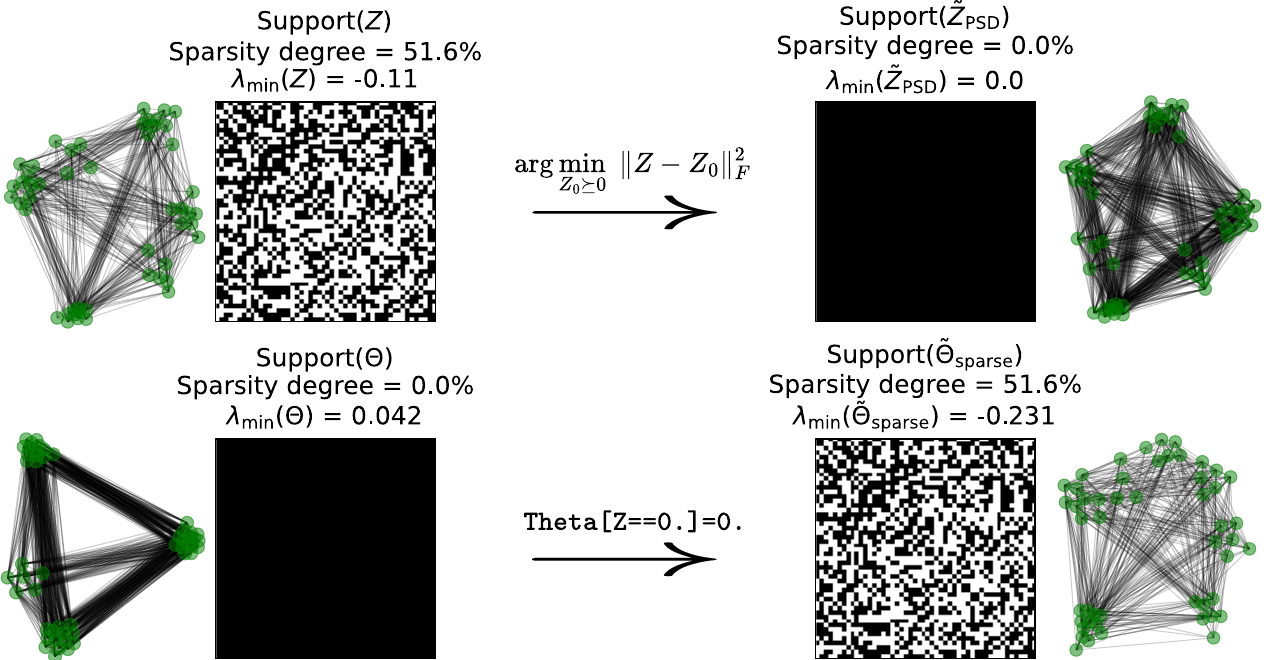}    \caption{
    \emph{First row:} Projecting GLAD-$\bZ$ onto the PSD cone. \emph{Second row: } Thresholding GLAD-$\bTheta$ following the support of GLAD-$\bZ$.}
  \label{fig:glad_non_spd_non_sparse}
\end{figure}

\end{document}